\newtheorem{theorem}{Theorem}
\newtheorem{lemma}{Lemma}
\newtheorem{assumption}{Assumption}
\title{Approximated Likelihood Ratio: A Forward-Only and Parallel Framework for Boosting Neural Network Training}
\author{Zeliang Zhang$^{1}$\footnotemark[1]\, Jinyang Jiang$^{2}$\footnotemark[1]\, Zhuo Liu$^{1}$\, Susan Liang$^{1}$\,  Yijie Peng$^{2}$\, Chenliang Xu$^{1}$\ \\
$^{1}$ University of Rochester \quad
$^{2}$Peking University\\}
\begin{document}

\maketitle
\renewcommand{\thefootnote}{\fnsymbol{footnote}} 
\footnotetext[1]{\ These authors contributed equally to this work. Listing order is random. }

\begin{abstract}
Efficient and biologically plausible alternatives to backpropagation in neural network training remain a challenge due to issues such as high computational complexity and additional assumptions about neural networks, which limit scalability to deeper networks. The likelihood ratio method offers a promising gradient estimation strategy but is constrained by significant memory consumption, especially when deploying multiple copies of data to reduce estimation variance. In this paper, we introduce an approximation technique for the likelihood ratio (LR) method to alleviate computational and memory demands in gradient estimation. By exploiting the natural parallelism during the backward pass using LR, we further provide a high-performance training strategy, which pipelines both the forward and backward pass, to make it more suitable for the computation on specialized hardware. Extensive experiments demonstrate the effectiveness of the approximation technique in neural network training. This work underscores the potential of the likelihood ratio method in achieving high-performance neural network training, suggesting avenues for further exploration.

\end{abstract}

\section{Introduction}
Deep neural networks (DNNs) have achieved remarkable success in various applications,  including object detection~\citep{li2022exploring},  text generation~\citep{li2022diffusion},  and more~\citep{gehlot2022application}. However,  the training of neural networks remains an expensive and time-consuming endeavor with the increasing size of models~\citep{athlur2022varuna},  exemplified by influential architectures like CLIP~\citep{li2022clip},  Diffusion~\citep{croitoru2023diffusion},  and GPT~\citep{lund2023chatting}. For instance,  GPT-3,  with its vast number of 175 billion parameters,  demands an extensive 34 days of training from scratch using 1,024 GPUs~\citep{ding2023mirage}.  In the past years,  there have been many works to train neural networks with higher efficiency,  e.g.,  the wide use of GPUs~\citep{wang2022lightseq2},   mixed precision acceleration using GPU tensor cores~\citep{liu2022high},  distributed training~\citep{agarwal2022utility},  sparking a huge interest in both academia and industry.

\begin{figure}
    \centering
    \includegraphics[width=0.8\linewidth]{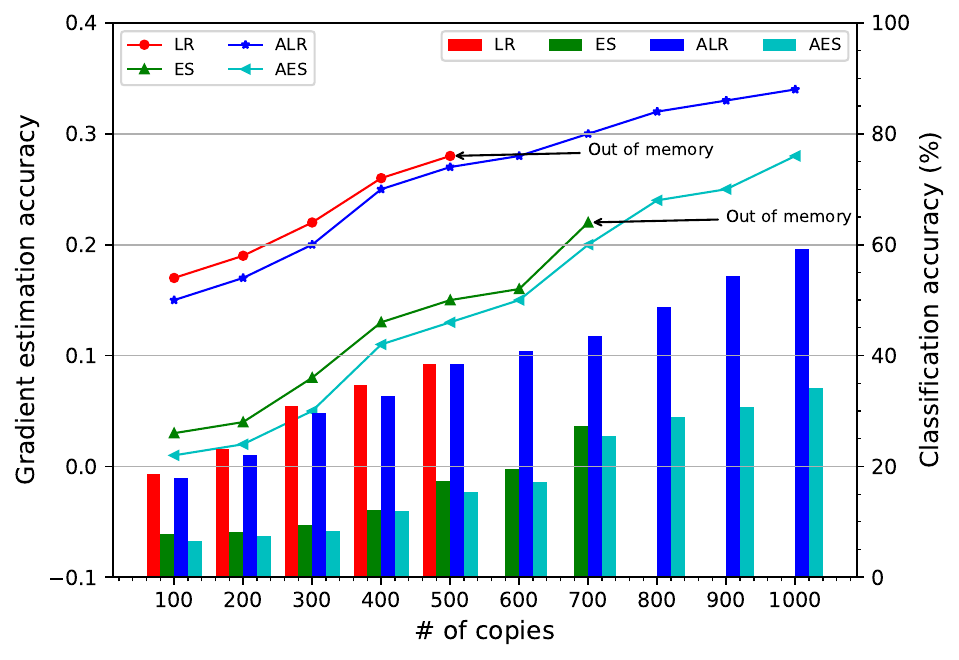}
    \vspace*{-0.1truein}
    \caption{Training ResNet-9 on the CIFAR-100 dataset using LR, ES, and corresponding approximated methods, ALR and AES.}
    \label{fig:intro}
\end{figure}

Despite continuous efforts to provide efficient DNN training,  the majority of current research {focuses} on largely the same optimization algorithm, i.e., gradient-descent combined with backpropagation (BP).
While {BP} provides an opportunity to globally optimize all model parameters, it also comes with prohibitive problems known as the \textit{weight transport}~\citep{grossberg1987competitive} and \textit{update locking}~\citep{jaderberg2017decoupled}, creating a bottleneck when training is assigned to specialized hardware for further acceleration. Moreover, the biophysical viability of {BP} comes into question as this intricate process seems ill-suited for implementation by actual neurons within the human brain~\citep{zarkeshian2022photons}, hindering the training efficiency by nature. Hence, exploring alternative approaches and gaining insights into the learning mechanisms in biological neural networks could inspire novel paradigms for DNN training.

In the past few years, the pursuit of biologically plausible efficient training techniques has attracted substantial attention. The HSIC {bottleneck}~\citep{ma2020hsic} maximizes the {Hilbert-Schmidt} independence criterion~\citep{wang2021learning} to improve the independence between layers,  thus achieving the layer-wise training but having a performance degradation due to the reduction of layer dependence.  The feedback alignment (FA)~\citep{nokland2016direct} and neural tangent kernel (NTK)~\citep{jacot2018neural} employ additional intermediate variables to break the chain rule in BP but suffer from instability and computation efficiency. Another avenue involves the application of {perturbation-based} methods~\citep{jiang2023training}, exemplified by the evolution strategy (ES)~\citep{salimans2017evolution} and likelihood ratio (LR)~\citep{peng2022new} methods. ES selects the best-perturbed weights for evolution while LR estimates the gradients for {iterative descent} by perturbing neuron outputs. A recent study identified that ES is one of the special cases of LR~\citep{jiang2023training}, where ES implicitly estimates the gradients by perturbing weights, and proposed a hybrid strategy of LR and ES to improve the performance.  In our work, we group conventional LR, ES, and hybrid as the LR family.


Among these methods,  the LR family stands out as a compelling choice for investigating efficient neural network training in the absence of BP. There are several reasons for this preference. {\underline{First},  LR has no assumption for layer independence,  which coordinates the training across entire neural networks, allowing for the learning of features from top to bottom for better performance and interoperability.} \underline{Second}, it is designed to estimate the gradient of the original loss function with respect to model parameters, obviating the need for any additional modules or criteria. This inherent simplicity makes it easily compatible with existing deep learning techniques originally developed for BP. \underline{Third},  LR has exhibited natural improvements in terms of robustness, thus holding the potential to create steadfast DNNs resilient to real-world adversarial challenges~\citep{peng2022new}.

Nonetheless, the utilization of LR encounters a constraint arising from significant memory consumption resulting from the requirement of numerous copies, essential to reduce estimation variance. For example, in Fig. \ref{fig:intro}, when training a ResNet-9 model on the CIFAR-100 dataset,  with increasing the {number of data copies} from $100$ to $500$, LR exhibits a notable improvement in average gradient estimation accuracy from $0.17$ to $0.28$, measured by cosine similarity with the true gradient, thus contributing to higher classification accuracy from $18.6\%$ to $38.5\%$. Unfortunately, the use of a larger number of copies for more accurate gradient estimation is curtailed by memory limitations. In the approaching post-Moore era, with the development of the hardware capacity slowing down, this memory-imposed constraint manifests itself as a performance gap between the biologically plausible LR method and the conventionally used BP approach.



To address these challenges and unleash the potential of  LR for high-performance neural network training,    we propose an approximated {LR} method {to compute the ascent direction}. Specifically, with a {convergence} guarantee, we just {take} the sign of intermediate variables used by LR for gradient estimation,  which largely reduces memory consumption during the training process,  allowing a larger number of copies for higher accuracy of gradient estimation and better task performance. Besides,  we implement neural network training with distributed computation techniques {on} both the data and layer level and propose a pipeline strategy, which theoretically and practically reduces the training time compared with BP. Our contributions are as follows: 
\begin{itemize}
    \item We propose a LR training framework, in which the approximation technique is invoked to estimate the {ascent direction}  for model optimization{,} and give a convergence analysis of the proposed optimization technique. 
    \item We analyze the parallelism in the proposed approximated LR training framework and present a high-performance implementation and a pipeline strategy. 
    \item We evaluate the correctness and effectiveness on a wide range of neural network architectures and datasets. Extensive experiments demonstrate the effectiveness of the approximated LR and highlight the great potential for the alternative of BP.
\end{itemize}

\section{{LR} Method for {DNN} Training}
We denote $L$ as the number of layers in {DNN}s and $m_{l}$ as the number of neurons in the $l$-th neural layer,  $l \in [1,  2,  ...,  L]$. For the input ${X}^{(0)} \in \mathbb{R}^{m_{0}}$,  we have the output of $l$-th layer ${X}^{(l)}=[{x}^{(l)}_{1},  {x}^{(l)}_{2},  ...,  {x}^{(l)}_{m_{l}}] \in \mathbb{R}^{m_{l}}$.

Suppose we have $N$ inputs for the network,  denoted as ${X}^{(0)}(n)$,  $n=1,  2,  ...,  N$. For the $n$-th input,  the $i$-th output at the $l$-th layer can be given by
\begin{align*}
    {x}^{(l)}_{i}(n)=\varphi({v}^{(l)}_{i}(n)),\quad {v}^{(l)}_{i}(n)=\sum_{j=0}^{m_{l-1}}\theta^{(l)}_{i, j}{x}^{(l-1)}_{j}(n)+{\sigma}^{(l)}_{i}{\varepsilon}^{(l)}_{i}(n), 
\end{align*}
where $\theta^{(l)}_{i, j}$ is the synaptic weight in the $i$-th neuron for the {$j$-th} input at the $l$-th layer,  {${v}^{(l)}_{i}(n)$} is the $i$-th logit output at the $l$-th layer {for the $n$-th data},  $\varphi$ is the activation function,  ${\varepsilon}^{(l)}_{i}(n)$ is an independent random noise following standard normal distribution added to the $i$-th neuron at the $l$-th layer,  and $\sigma_i^{(l)}$ is the standard deviation to scale up or down the noise. We let ${x}_0^{(l)}(n)\equiv 1${,} and then $\theta^{(l)}_{i, 0}$ {represents} the bias term in the linear operation of the $i$-th neuron at the $l$-th layer.  

{For the $n$-th input ${X}^{(0)}(n)$, we denote the loss as $\mathcal{L}_n(\theta, \sigma)$. In classification tasks, the loss function is usually the cross entropy computed by
\begin{align*}
    \mathcal{L}_n(\theta, \sigma) = -\sum_{i=1}^{m_{L}}{o}_{i}(n)\log\left( p_{i}({X}^{(L)}(n))\right), 
\end{align*}  
where ${O}{(n)}=[{o}_{1}(n),  {o}_{2}(n),  ...,  {o}_{m_{L}}(n)] \in \mathbb{R}^{m_{L}}$ is the corresponding label and $p_{i}({X}^{(L)}(n))=\frac{\exp{({x}^{(L)}_{i}}(n))}{\sum_{j=1}^{m_{L}}\exp{({x}^{(L)}_{j}(n)})}$.}
Training {DNN}s is to solve the following optimization problem:
\begin{align*}
    \min\limits_{(\theta, \sigma)\in\Theta\times\Sigma}\mathcal{L}(\theta, \sigma):=  \frac{1}{N}\sum_{n=1}^N\mathbb{E}\left[\mathcal{L}_n(\theta, \sigma)\right], 
\end{align*}
and a basic approach to solving it is by stochastic gradient descent (SGD) algorithms. Let $\omega=(\theta, \sigma)$ and $\Omega = \Theta\times\Sigma$. The SGD algorithm updates $\omega$ by
\begin{align*}
    \omega_{k+1} = \Pi_{\Omega}(\omega_k-\lambda_kg_k), \quad g_k = \frac{1}{b}\sum_{n\in B_k} g_{k, n}, 
\end{align*}
where $g_{k, n}$ is an unbiased estimator for the gradient of $\mathbb{E}\left[\mathcal{L}_n(\theta, \sigma)\right]$,  $B_k=\{n_k^1, \cdots, n_k^b\}$ is a set of indices in a mini-batch randomly drawn from the $N$ data points,  $\lambda_k$ is the learning rate,  and $\Pi_{\Omega}$ is the projection onto $\Omega$ that bounds the values of $\omega$ in order to achieve convergence of SGD.

{Using the} LR method,  we have an unbiased gradient estimation of $\mathbb{E}\left[\mathcal{L}_n(\theta, \sigma)\right]$ with respect to {DNN} parameters as follows:
\begin{equation}\label{gradient}
\begin{aligned}
    \frac{\partial \mathbb{E}\left[\mathcal{L}_n(\theta, \sigma)\right]}{\partial \theta_{i, j}^{(l)}}&=\mathbb{E}\left[\mathcal{L}_n(\theta, \sigma){x}^{({l-1})}_{j}(n)\frac{\varepsilon^{(l)}_{i}(n)}{{\sigma}^{(l)}_{i}}\right], \\
    \frac{\partial \mathbb{E}\left[\mathcal{L}_n(\theta, \sigma)\right]}{\partial {\sigma}^{(l)}_{i}}&=\mathbb{E}\left[\mathcal{L}_n(\theta, \sigma)\frac{1}{{\sigma}^{(l)}_{i}}\big(\varepsilon^{(l)}_{i}(n)^2-1\big)\right].
\end{aligned}
\end{equation}

\section{Approximated {LR} Method}
\label{sec:simplify_lr}
Drawing inspiration from the innate processes of nature, specifically the utilization of electronic signals ($\pm 1$) to facilitate the learning mechanisms in biological intelligence, our motivation stems from the aspiration to leverage bioelectric signals  to encode feedback information for gradient estimation. This approach seeks to overcome the limitations of LR and enhance its applicability in practical settings.

{In this section}, we first elaborate on our approximation {of} the LR method using sign encoding. Subsequently, we give a theoretical analysis of the convergence. Afterward, we propose a parallel strategy to achieve efficient training. 

\subsection{Approximation Through Sign Encoding}
We substitute the term inside the expectation in the first line of Eq. (\ref{gradient}) which contributes most of the computational burden. The original gradient estimation can be presented as 
\begin{align}\label{org_grad}
    g_k = \frac{1}{b}\sum_{n\in B_k}\mathcal{L}_n(\theta_k, \sigma_k)Z_n, 
\end{align}
where $Z_n = (Z_n^{\theta}, Z_n^{\sigma})$,   $(Z_n^{\theta})_{i, j}^{(l)}={x}^{({l-1})}_{j}(n)\frac{\varepsilon^{(l)}_{i}(n)}{{\sigma}^{(l)}_{i}}$ and $(Z_n^{\sigma})_{i}^{(l)}=\frac{1}{{\sigma}^{(l)}_{i}}(\varepsilon^{(l)}_{i}(n)^2-1)$. Then we propose the following surrogate ascent direction:
\begin{align}\label{sign_grad}
    \tilde{g}_k = \frac{1}{b}\sum_{n\in B_k}\mathcal{L}_n(\theta_k, \sigma_k)(\text{sign}(Z_n^{\theta}), Z_n^{\sigma}), 
\end{align}
where we use the $\text{sign}(\cdot)$ function to encode the  bioelectric signal. The surrogate gradient is integrated into SGD for optimization, \textit{i.e.}, 
\begin{align}\label{new_sgd}
    \omega_{k+1} = \Pi_{\Omega}(\omega_k-\lambda_k\tilde{g}_k).
\end{align}

\subsection{Convergence Analysis}
Denote the flattened $Z_n$ and $\omega_k$ as $[z_n^1, \cdots, z_n^D]^{\top}$ and $[\omega_k^1, \cdots, \omega_k^D]^{\top}$,  where the first $D_0$ dimensional terms correspond to the coordinates of the $\theta_k$ and $1<D_0<D$,  and the rest terms correspond to the coordinates of $\sigma_k$. Similar to LR,  $\tilde{g}_k$ is an unbiased estimator of
\begin{equation}\label{approx_gradient}
    \begin{aligned}
    &\mathcal{J}(\omega_k):=\frac{1}{N}\sum_{n=1}^N\mathbb{E}\left[\mathcal{L}_n(\omega_k)(\text{sign}(Z_n^{\theta}), Z_n^{\sigma})\right]\\
    & = \frac{1}{N}\sum_{n=1}^N\Bigg{[}
    \frac{\partial}{\partial\omega_{k, 1}}\mathbb{E}\left[\frac{\mathcal{L}_n(\omega_k)}{\vert Z_{n, 1}\vert}\right], \cdots, \\
    &\frac{\partial}{\partial\omega_{k, D_0}}\mathbb{E}\left[\frac{\mathcal{L}_n(\omega_k)}{\vert Z_{n, D_0}\vert}\right], 
    \frac{\partial\mathbb{E}\left[\mathcal{L}_n(\omega_k)\right]}{\partial\omega_{k, D_0+1}}, \cdots, 
    \frac{\partial\mathbb{E}\left[\mathcal{L}_n(\omega_k)\right]}{\partial\omega_{k, D}}\Bigg{]}^{\top}.
\end{aligned}
\end{equation}

\begin{figure*}[t]
\centering
    \includegraphics[width=0.8\linewidth]{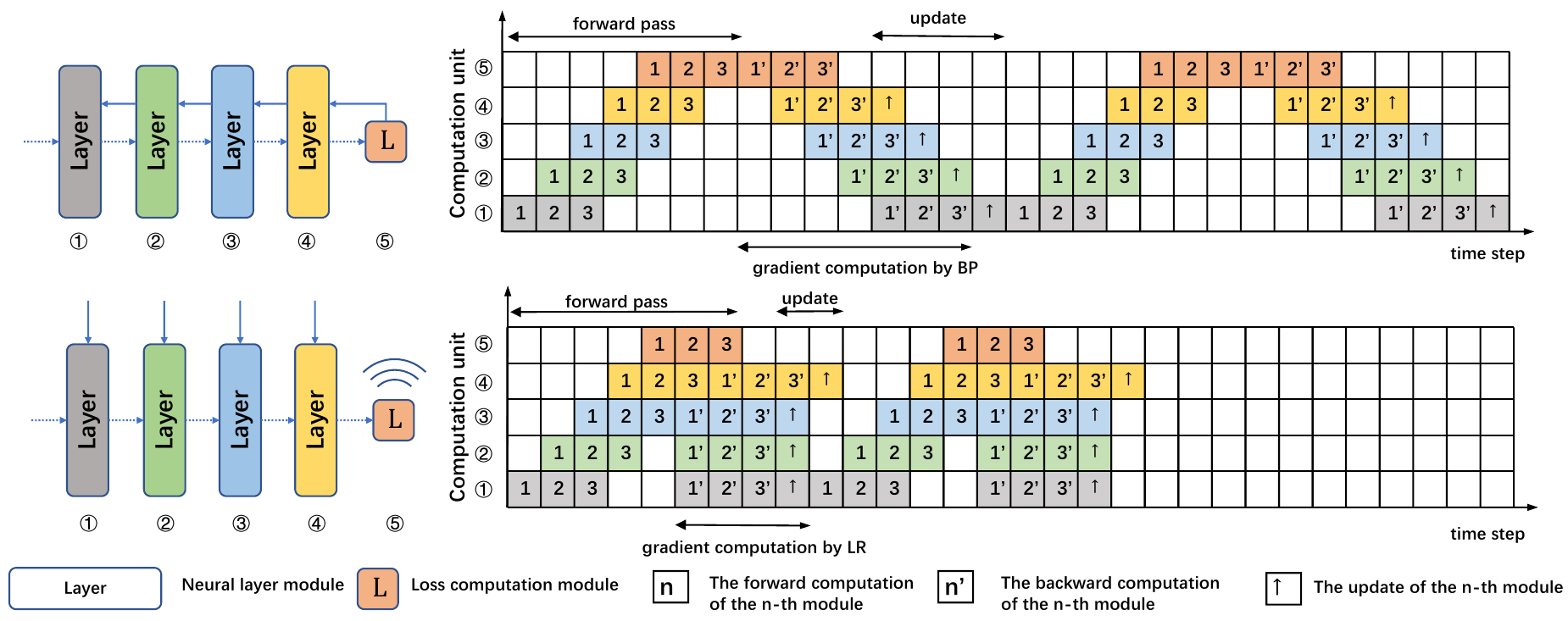}
    \caption{The design of hardware-efficient LR training, which pipelines both forward and gradient computation process. }
    \label{fig:pipeline}
    \vspace{-2mm}
\end{figure*}



Now we assume the objective function has a unique equilibrium point $\tilde{\omega}^*\in\Omega$ and discuss the convergence of recursion (\ref{new_sgd}). We define $\mathcal{F}_k=\{\omega_0, \cdots, \omega_k\}$ as the $\sigma$-algebra generated by our algorithm for $k=0, 1, \cdots$. Here we introduce some assumptions before the analysis.
\begin{assumption}\label{a1}
The parameter set $\Omega\subset\mathbb{R}^d$ is closed,  convex, and compact.
\end{assumption}
\begin{assumption}\label{a2}
The cost function $\mathcal{L}^d(\omega_k)$ is continuously differentiable in $\omega_k$,  and convex in $\omega_{k, d}$ for all given $\omega_{k, -d}=[\omega_{k, 1}, \cdots, \omega_{k, d-1}, \omega_{k, d+1}, \cdots\omega_{k, D}]\in\Omega_{-d}\subset\mathbb{R}^{d-1}$.
\end{assumption}
\begin{assumption}\label{a3}
The step-size sequence $\{\gamma_k\}$ satisfies $\gamma_k>0$,  $\sum_{k=0}^{\infty}\gamma_k=\infty$,  $\sum_{k=0}^{\infty}\gamma_k^2<\infty$.
\end{assumption}
\begin{assumption}\label{a4}
The loss value is uniformly bounded,  i.e.,  for all $\omega\in\Omega$,  $\vert\mathcal{L}_n(\omega)\vert\leq M<\infty$ w.p.1.
\end{assumption}

We expect recursion (\ref{new_sgd}) to track an ODE:
\begin{align}\label{proj_ode}
    \dot{\omega}(t) = \tilde{\Pi}_{\Omega}(\mathcal{J}(\omega(t))), 
\end{align}
with $\tilde{\Pi}_{\Omega}(\cdot)$ being a projection function satisfying $\tilde{\Pi}_{\Omega}(\mathcal{J}(\omega(t)))=\mathcal{J}(\omega(t))+p(t)$,  where $p(t)\in-C(\omega(t))$ is the vector with the smallest norm needed to keep $\omega(t)$ in $\Omega$,  and $C(\omega)$ is the normal cone to $\Omega$ at $\omega$. We first establish the unique global asymptotically stable equilibrium for ODE (\ref{proj_ode}).

\begin{lemma}\label{lemma}
    If Assumptions \ref{a1} and \ref{a2} hold,  then $\tilde{\omega}^*$ is the unique global asymptotically stable equilibrium of ODE (\ref{proj_ode}).
\end{lemma}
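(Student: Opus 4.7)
The plan is to exhibit a Lyapunov function for the projected ODE (\ref{proj_ode}) and then invoke a LaSalle-style invariance argument, since the coordinate-wise convexity supplied by Assumption \ref{a2} is, on its face, not strong enough to yield joint monotonicity of $\mathcal{J}$. First I would identify, coordinate by coordinate, a scalar ``coordinate potential'' $\mathcal{L}^d(\omega)$ such that $\mathcal{J}_d(\omega)=\partial \mathcal{L}^d(\omega)/\partial\omega_d$; reading off (\ref{approx_gradient}), the natural choice is $\mathcal{L}^d(\omega)=\frac{1}{N}\sum_n\mathbb{E}[\mathcal{L}_n(\omega)/|Z_{n,d}|]$ for $d\le D_0$ and $\mathcal{L}^d(\omega)=\frac{1}{N}\sum_n\mathbb{E}[\mathcal{L}_n(\omega)]$ for $d>D_0$. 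Assumption \ref{a2} then supplies, for every $d$ and every fixed $\omega_{-d}\in\Omega_{-d}$, the one-variable convex inequality $\mathcal{L}^d(\tilde{\omega}^*_d,\omega_{-d})\ge \mathcal{L}^d(\omega_d,\omega_{-d})+\mathcal{J}_d(\omega)(\tilde{\omega}^*_d-\omega_d)$.

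Second, I would characterize $\tilde{\omega}^*$ analytically. A standard property of the Skorokhod projection $\tilde{\Pi}_\Omega$ gives that any equilibrium of (\ref{proj_ode}) satisfies $\mathcal{J}(\tilde{\omega}^*)\in -C(\tilde{\omega}^*)$; in the interior case this collapses to $\mathcal{J}_d(\tilde{\omega}^*)=0$ for every $d$, which, combined with the one-dimensional convexity above, says exactly that $\tilde{\omega}^*_d\in\arg\min_{\omega_d}\mathcal{L}^d(\omega_d,\tilde{\omega}^*_{-d})$ coordinate by coordinate. The Lyapunov candidate I would try first is the separable, non-negative function $V(\omega)=\sum_{d=1}^{D}\bigl[\mathcal{L}^d(\omega)-\mathcal{L}^d(\tilde{\omega}^*_d,\omega_{-d})\bigr]$, which is zero exactly at points where each $\omega_d$ is coordinate-wise optimal for $\mathcal{L}^d$, and hence, by the assumed uniqueness of $\tilde{\omega}^*$, only at $\tilde{\omega}^*$ itself. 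Differentiating $V$ along (\ref{proj_ode}), splitting $\tilde{\Pi}_\Omega(\mathcal{J}(\omega))=\mathcal{J}(\omega)+p$ with $p\in -C(\omega)$, and using convexity of $\Omega$ (Assumption \ref{a1}) to discard the normal-cone contribution, should give $\dot V\le 0$ with equality only on the set $\{\omega:\mathcal{J}_d(\omega)=0\ \forall d\}$.

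Third, I would close the argument by LaSalle's invariance principle: Assumption \ref{a1} makes $\Omega$ compact and convex so trajectories stay bounded, Assumption \ref{a2} gives the regularity needed for $V$ and $\mathcal{J}$, and the largest invariant subset of $\{\omega\in\Omega:\dot V(\omega)=0\}$ reduces, by the uniqueness hypothesis on $\tilde{\omega}^*$, to the singleton $\{\tilde{\omega}^*\}$. Hence every trajectory of (\ref{proj_ode}) converges to $\tilde{\omega}^*$, which establishes global asymptotic stability.

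The hard part will be controlling the cross-coordinate terms that arise when differentiating the separable potential $V$, because $\mathcal{L}^d$ depends on \emph{all} coordinates while Assumption \ref{a2} only gives convexity in $\omega_d$; in particular, one cannot hope to identify $\dot V$ directly with $-\sum_d \mathcal{J}_d(\omega)^2$. I would handle this either by strengthening the computation to a pointwise ``coordinate descent'' estimate along the flow (each $\omega_d(t)$ strictly decreases $\mathcal{L}^d(\cdot,\omega_{-d}(t))$ whenever $\mathcal{J}_d(\omega(t))\neq 0$), or, if the cross terms prove unwieldy, by falling back on $V(\omega)=\tfrac12\|\omega-\tilde{\omega}^*\|^2$ together with the one-dimensional convexity inequalities summed over $d$ and a careful treatment of the projection correction $p$ via the standard inner-product identity $\langle \omega-\tilde{\omega}^*,p\rangle\le 0$ that holds for $p\in -C(\omega)$ when $\Omega$ is convex. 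In either route, the remaining step is routine: Assumption \ref{a3} is not used in the ODE analysis itself but will enter later when this lemma is combined with a stochastic-approximation comparison theorem to lift ODE stability back to the iteration (\ref{new_sgd}).
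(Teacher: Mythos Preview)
Your fallback route---the quadratic Lyapunov function $V(\omega)=\tfrac12\|\omega-\tilde{\omega}^*\|^2$ together with the normal-cone inequality $\langle \omega-\tilde{\omega}^*,p\rangle\le 0$---is exactly what the paper does, and it is all the paper does. The authors take $V(x)=\|x-\tilde{\omega}^*\|^2$ from the outset, compute $\dot V(x)=2(x-\tilde{\omega}^*)^\top(\mathcal{J}(x)+p(t))$, dispose of the projection piece via convexity of $\Omega$, and assert $(x-\tilde{\omega}^*)^\top\mathcal{J}(x)\le 0$ for $x\neq\tilde{\omega}^*$ by invoking Assumption~\ref{a2} and a citation to Facchinei--Pang; global asymptotic stability is then claimed directly from Lyapunov theory, without LaSalle.

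Your primary route (the separable potential $V(\omega)=\sum_d[\mathcal{L}^d(\omega)-\mathcal{L}^d(\tilde{\omega}^*_d,\omega_{-d})]$) is genuinely different and, as you already suspect, harder: the cross-coordinate derivatives of $\mathcal{L}^d$ in $\omega_{-d}$ do not cancel, and you would be forced into a LaSalle argument with a delicate characterization of the invariant set. The paper sidesteps this entirely. What your more careful analysis buys is an honest confrontation with the fact that coordinate-wise convexity of $\mathcal{L}^d$ in $\omega_d$ does \emph{not} obviously yield the joint monotonicity-type inequality $(x-\tilde{\omega}^*)^\top\mathcal{J}(x)\le 0$; the paper simply asserts this step with a citation, whereas you are (rightly) flagging it as the crux. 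If you go with the quadratic $V$, that inequality is the one place you need to supply an argument, and your one-variable convexity inequalities summed over $d$ are the natural tool---but note that the sum $\sum_d[\mathcal{L}^d(\omega)-\mathcal{L}^d(\tilde{\omega}^*_d,\omega_{-d})]$ is not manifestly non-negative, so the strict-sign conclusion still leans on the standing uniqueness hypothesis for $\tilde{\omega}^*$.
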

\begin{proof}
With Assumptions \ref{a1} and \ref{a2},  if $\tilde{\omega}^*\in\Omega^{\circ}$,  then $\mathcal{J}(\tilde{\omega}^*)=0$ and $C(\tilde{\omega}^*)=\{0\}$; if $\tilde{\omega}^*\in\partial\Omega$,  then $\mathcal{J}(\tilde{\omega}^*)$ must lie in $C(\tilde{\omega}^*)$,  so $p(t)=-\mathcal{J}(\tilde{\omega}^*)$.
By the convexity of $\mathcal{L}^d(\omega_k)$ and the assumption that $\tilde{\omega}^*\in\Omega$ is the unique equilibrium point,  $\tilde{\omega}^*\in\Omega$ is the unique equilibrium point of ODE (\ref{proj_ode}).
Take $V(x)=\Vert x-\tilde{\omega}^*\Vert^2$ as the Lyapunov function,  and the derivative is $\dot{V}(x)=2(x-\tilde{\omega}^*)^{\top}(\mathcal{J}(x)+p(t))$.
By Assumption \ref{a2} and \cite{facchinei2003finite},  $(x-\tilde{\omega}^*)^{\top}\mathcal{J}(x)\leq0$ for any $x\neq\tilde{\omega}^*$.
Since $p(t)\in C(x)$,  we have $(x-\tilde{\omega}^*)^{\top}p(t)\leq0$. Therefore,  $\tilde{\omega}^*$ is global asymptotically stable by the Lyapunov Stability Theory \cite{liapounoff2016probleme}.
\end{proof}

To prove that recursion (\ref{new_sgd}) tracks ODE (\ref{proj_ode}),  we apply a convergence theorem in \citep{borkar1997stochastic} as below:
\begin{theorem}\label{theorem1}
Consider the recursion
\begin{align*}
    \omega_{k+1}=\Pi_{\Omega}(\omega_k+\gamma_k(\mathcal{J}(\omega_k)+\delta_k)), 
\end{align*}
where $\Pi_{\Omega}$ is a projection function,  $\mathcal{J}$ is Lipschitz continuous,  $\{\gamma_k\}$ satisfies Assumption \ref{a1},  $\{\delta_k\}$ is random variable sequence satisfying $\sum_k\gamma_k\delta_k<\infty$,  a.s. If ODE (\ref{proj_ode}) has a unique global asymptotically stable equilibrium $\tilde{\omega}^*$,  then the recursion converges to $\tilde{\omega}^*$.
\end{theorem}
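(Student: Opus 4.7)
The plan is to invoke the classical ODE method for stochastic approximation, since Theorem \ref{theorem1} is essentially the projected Robbins--Monro result of Borkar cited in the statement. The work consists of verifying that the three standard ingredients---step-size summability, asymptotically vanishing noise, and a well-posed limiting ODE---are in force, and then transferring asymptotic stability of the ODE to almost-sure convergence of the iterates.

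First I would construct the continuous-time interpolation $\bar\omega(t)$ of the sequence $\{\omega_k\}$ on the time scale $t_k=\sum_{j=0}^{k-1}\gamma_j$, with $\bar\omega(t_k)=\omega_k$ and piecewise-linear (or piecewise-constant) interpolation in between. Using Assumption \ref{a1} the iterates stay in the compact set $\Omega$, so $\bar\omega$ is uniformly bounded; combined with the Lipschitz continuity of $\mathcal{J}$ this gives equicontinuity of the shifted trajectories $\{\bar\omega(\cdot+T_n)\}$ for any sequence $T_n\to\infty$. Next I would control the accumulated noise: define $M_k=\sum_{j=0}^{k-1}\gamma_j\delta_j$ and observe that the hypothesis $\sum_k\gamma_k\delta_k<\infty$ a.s.\ means $\{M_k\}$ converges, so the Cauchy tails $\sup_{m\geq k}\lVert M_m-M_k\rVert\to 0$ a.s., and the noise contribution over any finite window $[t_k,t_k+T]$ vanishes as $k\to\infty$.

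With these two facts in hand, the standard Gronwall argument shows that on any finite horizon $[0,T]$ the shifted interpolation $\bar\omega(\cdot+t_k)$ is asymptotically close to the solution of the projected ODE (\ref{proj_ode}) started at $\omega_k$; the projection term $p(t)$ is handled by a Skorokhod-type decomposition, noting that the discrete projections $\Pi_{\Omega}$ contribute vectors lying in the normal cone $C(\omega)$, which is exactly what $\tilde{\Pi}_{\Omega}$ produces in the limit. Applying Arzel\`a--Ascoli along subsequences then identifies every limit point of $\bar\omega(\cdot+t_k)$ as a full trajectory of (\ref{proj_ode}).

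The conclusion follows by combining this with Lemma \ref{lemma}: since $\tilde\omega^*$ is the unique globally asymptotically stable equilibrium, every trajectory of (\ref{proj_ode}) converges to $\tilde\omega^*$, so every limit point of $\{\omega_k\}$ must equal $\tilde\omega^*$, yielding $\omega_k\to\tilde\omega^*$ a.s. The main obstacle I anticipate is the careful treatment of the projection term---showing that the boundary reflections in the discrete recursion converge to the correct normal-cone element $p(t)$ in the ODE rather than creating spurious drift---since $\Omega$ need not be smooth and $\mathcal{J}$ may point outward on $\partial\Omega$; this is exactly the delicate step that Borkar's theorem packages for us, so in the write-up I would lean on that citation rather than reproving the projection convergence from scratch, and focus on verifying the hypotheses (Lipschitzness of $\mathcal{J}$ via smoothness of $\mathcal{L}_n$ and boundedness of $1/\lvert Z_{n,d}\rvert$ moments, together with Assumptions \ref{a3} and \ref{a4}).
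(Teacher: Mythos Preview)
The paper does not prove Theorem~\ref{theorem1} at all: it is stated verbatim as ``a convergence theorem in \cite{borkar1997stochastic}'' and simply quoted without argument. Your outline of the ODE method (continuous interpolation, Arzel\`a--Ascoli, noise-tail control, identification of limit trajectories with solutions of the projected ODE) is a faithful sketch of how Borkar's result is actually proved, so in that sense you have gone beyond what the paper does.

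That said, your proposal blurs the boundary between Theorem~\ref{theorem1} and Theorem~\ref{theorem2}. Theorem~\ref{theorem1} is an abstract statement: global asymptotic stability of the ODE is a \emph{hypothesis}, not something to be derived, so invoking Lemma~\ref{lemma} here is unnecessary. Likewise, the closing remarks about verifying Lipschitzness of $\mathcal{J}$ via smoothness of $\mathcal{L}_n$, moment bounds on $1/|Z_{n,d}|$, and Assumptions~\ref{a3}--\ref{a4} belong to the \emph{application} of Theorem~\ref{theorem1}---which is exactly the content of Theorem~\ref{theorem2} in the paper---not to its proof. If you want to match the paper's structure, Theorem~\ref{theorem1} should be a one-line citation to Borkar, and the hypothesis-checking you describe should be moved into the proof of Theorem~\ref{theorem2}.
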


Next, we show that the conditions in Theorem \ref{theorem1} can be verified in Theorem \ref{theorem2}.
\begin{theorem}\label{theorem2}
If Assumptions \ref{a1},  \ref{a2},  \ref{a3} and \ref{a4} hold,  then the sequence $\{\omega_k\}$ generated by recursion (\ref{new_sgd}) converges to the unique optimal solution w.p.1.
\end{theorem}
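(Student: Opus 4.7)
The plan is to cast recursion~(\ref{new_sgd}) into the template of Theorem~\ref{theorem1} and then verify its hypotheses one by one. I would identify the step-size as $\gamma_k=\lambda_k$ and the noise as $\delta_k=\mathcal{J}(\omega_k)-\tilde{g}_k$ (absorbing the minimization sign into $\mathcal{J}$), so that (\ref{new_sgd}) fits the template with $\mathcal{J}$ as the expected ascent direction. Eq.~(\ref{approx_gradient}) already shows $\tilde{g}_k$ is an unbiased estimator of $\mathcal{J}(\omega_k)$, so $\mathbb{E}[\delta_k\mid\mathcal{F}_k]=0$ and $\{\delta_k\}$ is a martingale difference sequence with respect to $\mathcal{F}_k$. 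Lemma~\ref{lemma} then supplies the unique globally asymptotically stable equilibrium of ODE~(\ref{proj_ode}), and Assumption~\ref{a3} supplies the step-size requirement of Theorem~\ref{theorem1}.

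The two remaining ingredients are a uniform second-moment bound on $\delta_k$ and the Lipschitz continuity of $\mathcal{J}$. For the second-moment bound I would argue from the structural ingredients of $\tilde{g}_k$: Assumption~\ref{a4} caps $\vert\mathcal{L}_n\vert$ by $M$ with probability one; the $\text{sign}(Z_n^\theta)$ components are bounded by $1$; and each $(Z_n^\sigma)_i^{(l)}=(\varepsilon^2-1)/\sigma$ has finite second moment because the compactness of $\Omega$ (Assumption~\ref{a1}) keeps $\sigma$ uniformly bounded away from zero and $\varepsilon$ is standard Gaussian. Combining these yields $\mathbb{E}[\Vert\delta_k\Vert^2\mid\mathcal{F}_k]\leq K<\infty$ uniformly in $k$. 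Together with $\sum_k\gamma_k^2<\infty$ from Assumption~\ref{a3}, the Doob martingale convergence theorem then delivers $\sum_k\gamma_k\delta_k<\infty$ almost surely. Once the Lipschitz continuity of $\mathcal{J}$ is in place, all hypotheses of Theorem~\ref{theorem1} are verified and the conclusion $\omega_k\to\tilde{\omega}^*$ w.p.\,1 follows immediately.

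The hardest step, I expect, is establishing that $\mathcal{J}$ is Lipschitz (indeed, continuously differentiable) on $\Omega$, particularly in its first $D_0$ coordinates where it appears as the derivative of $\mathbb{E}[\mathcal{L}_n(\omega_k)/\vert Z_{n,d}\vert]$. Since $1/\vert Z_{n,d}\vert$ is not integrable naively, one cannot blindly swap differentiation and expectation. The remedy is to integrate by parts against the Gaussian density of $\varepsilon$ exactly in the spirit of the original likelihood-ratio derivation, which converts the singular factor $1/\vert Z_{n,d}\vert$ into the bounded factor $\text{sign}(Z_{n,d})$; this is precisely the identity implicit in passing from Eq.~(\ref{approx_gradient}) to the estimator in Eq.~(\ref{sign_grad}). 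With the compactness of $\Omega$ (Assumption~\ref{a1}) providing uniform lower bounds on $\sigma$ and the smoothness of $\mathcal{L}_n$ from Assumption~\ref{a2}, dominated convergence yields the needed uniform Lipschitz bound on $\mathcal{J}$, and the remaining pieces slot into Theorem~\ref{theorem1} to complete the argument.
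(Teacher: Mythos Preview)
Your proposal is correct and follows the same route as the paper: rewrite (\ref{new_sgd}) with $\delta_k=\mathcal{J}(\omega_k)-\tilde{g}_k$, show $\sum_k\gamma_k\delta_k$ converges a.s.\ via the martingale convergence theorem (using unbiasedness, Assumption~\ref{a4}, and $\sum_k\gamma_k^2<\infty$), and then invoke Lemma~\ref{lemma} and Theorem~\ref{theorem1}. You are in fact more careful than the paper, which tacitly skips the Lipschitz verification for $\mathcal{J}$ that you flag as the delicate step and which asserts $\delta_i^2\leq M^2$ without addressing the unbounded $Z_n^{\sigma}$ component that you handle explicitly.
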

\begin{proof} Recursion (\ref{new_sgd}) can be rewritten as
\begin{align*}
    \omega_{k+1}=\omega_k-\gamma_k\mathcal{J}(\omega_k)+\gamma_k \delta_k, 
\end{align*}
where $\delta_k=\mathcal{J}(\omega_k)-\tilde{g}_k$. Let $M_k=\sum_{i=0}^k \gamma_i\delta_i$. Since $\tilde{g}_k$ is the unbiased estimator of $\mathcal{J}(\omega_k)$,  $\{M_k\}$ is a martingale sequence. We can verify that it is $L^2$-bounded. With Assumptions \ref{a3} and \ref{a4},  we have
\begin{align*}
    \sum_{i=0}^k\gamma_i^2\delta_i^2\leq M^2\sum_{i=0}^k\gamma_i^2<\infty.
\end{align*}
By noticing $\mathbb{E}[\delta_i|\mathcal{F}_i]=0$,  we have 
\begin{align*}
    \mathbb{E}[\gamma_i\delta_i\gamma_j\delta_j]=\mathbb{E}[\gamma_i\delta_i\mathbb{E}[\gamma_j\delta_j|\mathcal{F}_j]]=0, 
\end{align*}
for all $i<j$. Thus $\sup_{k\geq0}\mathbb{E}[M_k^2]<\infty$. From the martingale convergence theorem \citep{durrett2019probability},  we have $M_k\rightarrow M_{\infty}$ w.p.1,  which implies that $\{M_k\}$ is bounded w.p.1. Now all conditions in Theorem 1 are satisfied. Therefore,  it is almost sure that recursion (\ref{new_sgd}) converges to the unique global asymptotically stable equilibrium of ODE (\ref{proj_ode}),  which is the equilibrium point $\tilde{\omega}^*$ by the conclusion of Lemma \ref{lemma}.
\end{proof}


\subsection{Parallel Analysis and Implementations}

 
There are two levels of parallelism in the approximated LR computation,  which we can fully exploit to further accelerate the neural network training,  namely the data-level and layer-level parallelism. 


\paragraph{Data-level parallel.} For a given data $X$,  we use multiple copies to compute the estimated gradient $g$ with reducing the large estimation variance. The computation for different copies is independent and parallel,  which enables us to use distributed computation techniques for acceleration.

\paragraph{Layer-level parallel.} The gradient of the parameter for the update in each layer only depends on the loss value $l$,  the input feature $X$,  and the injected random noise {$\varepsilon$}. The computation of $g$ for different layers is independent and parallel,  which enables us to optimize the layers all in parallel.

The approximated LR computation can be integrated with the data-level and layer-level parallelism,  allowing us to use more copies of data for accurate gradient estimation in training,  thus achieving memory-friendly,  computation-efficient,  and stable neural network training. 


\subsection{Hardware-efficient LR Training Pipeline}

To efficiently manage large batch size training in neural networks, a conventional approach involves partitioning the data into buckets and orchestrating concurrent forward and backward passes~\cite{huang2019gpipe}. This process is exemplified in Fig. \ref{fig:pipeline}, where a $4$-layer model is trained with three data buckets. In this configuration, each computation unit accommodates all layers and the loss computation module. Notably, these computation units adhere to an atomic principle, allowing only one computation task per time step. Nevertheless, due to the inherent reliance on the chain rule of differentiation, a substantial $53.3\%$ redundant space persists throughout the training process, resulting in suboptimal utilization of computation units.

Leveraging the gradient independence between layers in LR, we can enhance the efficiency by parallelizing forward computations in the latter units and gradient estimations in the leading free units, depicted in the lower section of Fig. \ref{fig:pipeline}. This strategy yields both a theoretical speedup of $1.5 \times$, in terms of the expected time steps, compared to BP-based optimization and a remarkable reduction in redundant space to $24\%$.

\section{Evaluations}

To demonstrate the effectiveness of the proposed approximated LR approach, we follow \citep{jiang2023training} to conduct a comprehensive investigation with a diverse range of architectures and datasets. 
We first verify the correctness of the approximation technique, \textit{i.e.}, the sign encoding. Then, we evaluate the feasibility, scalability, and generalizability on image recognition, text classification, and graph node classification tasks with residual neural network (ResNet), long short-term memory (LSTM) network, and spiking neural network (SNN). We further evaluate the running efficiency with our proposed parallel framework. We provide the full experiment setting in Section A of our appendix.




\subsection{Verification Study on the Approximation}
To verify the approximation approach, \textit{i.e.}, sign encoding, we select three zero-order optimization methods as baselines: (i) {LR}~\citep{peng2022new}:  the gradient is estimated by only injecting the noise on the intermediate layer outputs; (ii) {ES}~\citep{salimans2017evolution}: the gradient is estimated by only injecting the noise on the neural network parameters; (iii) {Hybrid}~\citep{jiang2023training}: the gradient is estimated by the hybrid manner of LR and ES, where the ES is used for the first two layers and LR is used for the remaining layers. 

In line with this, we incorporate our proposed approximation technique into each of the studied methods, namely {ALR}, {AES}, and {A-Hybrid}, respectively. The models are trained on the CIFAR-10 dataset for 100 epochs to ensure adequate convergence. The training process employs a batch size of $50$, $200$ copies, and a learning rate of $1 \times 10^{{-3}}$. We split the dataset for training,  validation,  and testing with the ratio $7:2:1$. We report the loss and accuracy for the training and validation datasets. 

\begin{figure}
\centering
\vspace{-2.5mm}
\hspace{-7mm}
\subfloat[Classification accuracy]{
    \begin{minipage}[b]{0.41\linewidth}
    \includegraphics[width=4.0cm]{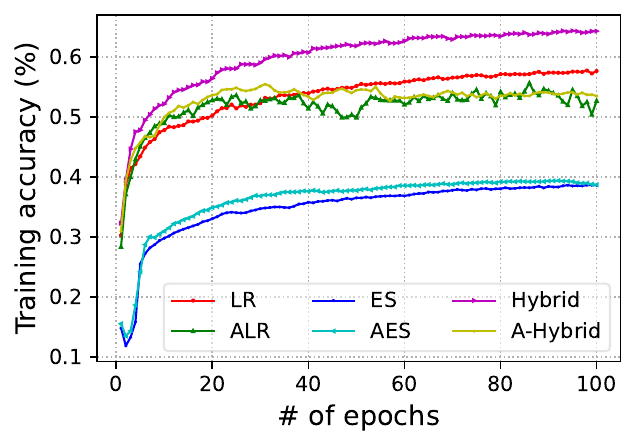}\vspace{-1pt} 
    \includegraphics[width=4.0cm]{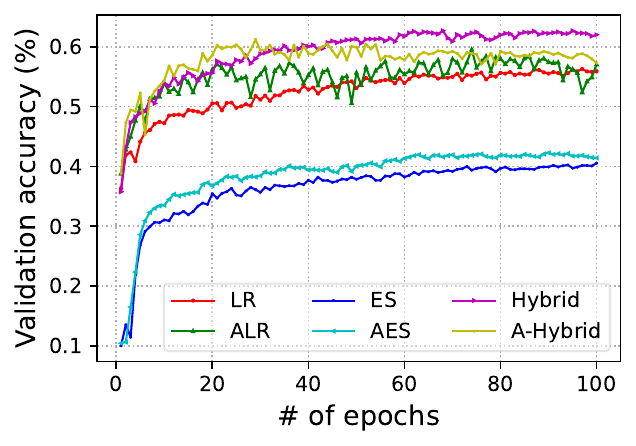}\vspace{-1pt}
    \end{minipage}
}
\quad 
\subfloat[Loss value]{
    \begin{minipage}[b]{0.41\linewidth}
    \includegraphics[width=4.0cm]{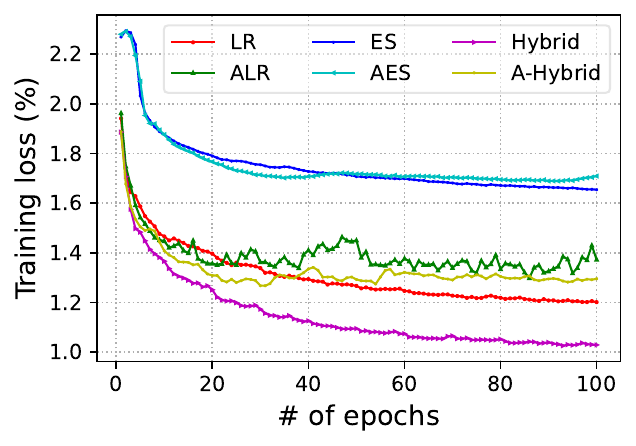}\vspace{-1pt} 
    \includegraphics[width=4.0cm]{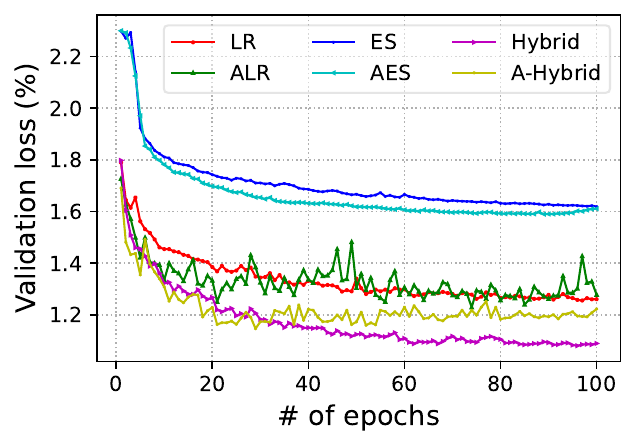}\vspace{-1pt}
    \end{minipage}
}
\vspace*{-0.1truein}
\caption{Learning curves of ResNet-5 on CIFAR-10. }
\label{fig:veri_cifar}
\end{figure}
The training logs, as illustrated in Fig. \ref{fig:veri_cifar}, provide valuable insights into the impact of the approximation technique. Across all models – ALR, AES, and A-Hybrid – the integration of the approximation method showcases effective knowledge acquisition from the data, yielding commendable performances. Notably, ALR achieves a remarkable validation accuracy of $59.5\%$, AES reaches a peak validation accuracy of $43.1\%$, while A-Hybrid attains the highest accuracy at $61.3\%$. Concurrently, the employment of the approximated computation significantly expedites optimization convergence. Analysis of the validation accuracy reveals A-Hybrid's rapid improvement in the initial 25 epochs compared to the conventional Hybrid approach. Moreover, ALR and AES outshine their non-approximated counterparts in terms of performance. However, it's important to acknowledge that the approximated method occasionally demonstrates signs of overfitting and instability, evident in the fluctuating loss values. These observations collectively underscore the validity of the approximated LR method, emphasizing its attributes of accelerated convergence coupled with inherent instability.

\subsection{Performance Evaluation on Larger Datasets}
\paragraph{Evaluation on CIFAR-100 dataset.}
\begin{table}[]
    \centering
    \caption{Classification accuracies of ResNet-9 on the CIFAR-100 dataset using different optimization methods.}
    \label{tab:comp_cifar100}
     \resizebox{\linewidth}{!}{
    \begin{tabular}{cccccccc}
         \toprule
         Methods& BP& HSIC & FA  & ES & LR  & Hybrid   \\
         \hline
         w/o A&  65.2& 35.7 & 25.1  & 27.3 &38.5 & 42.6 \\
         \textbf{w A} &  \textbf{62.7}&  \textbf{34.9} & \textbf{23.8} &  {$\bf{34.1}$} & {$\bf{59.2}$} &  {$\bf{64.9}$}\\
         \bottomrule
    \end{tabular}}
\end{table} For a more comprehensive study, we train the ResNet-9 on the CIFAR-100 dataset using BP~\citep{rumelhart1986learning} and various non-BP optimization methods, including HSIC~\citep{ma2020hsic}, FA~\citep{nokland2016direct},  ES~\citep{salimans2017evolution}, LR~\citep{peng2022new}, and Hybrid~\citep{jiang2023training}. Among them, HSIC and FA are local learning methods without relying on the gradients, while ES, LR, and Hybrid are gradient estimation optimization methods.  We integrate the approximated technique into different methods, which takes the sign of the update for parameter optimization. The numerical results are shown in Tab. \ref{tab:comp_cifar100}. The gradient estimation methods, especially the LR and Hybrid with the integration of the approximation can achieve comparable performance with BP. Specifically, the Hybrid optimization with the approximated technique only has a $0.3\%$ classification accuracy drop. It surpasses the runner-up method, which belongs to another optimization category, HSIC, with a significant improvement of $32.2\%$. Besides, while the application of approximation technique has a negative impact on the performance of the compared baselines, including the BP, HSIC, and FA, it benefits the gradient estimation methods, especially the LR and ES, with an average improvement of $16.6\%$.

\paragraph{Evaluation on Tiny-ImageNet dataset.}Benefiting from the reduced memory consumption, LR can scale to real-world large datasets. To give a further scalability evaluation, we train the ResNet-12 to classify the Tiny-ImageNet dataset.  We take the supported maximum number of copies for gradient estimation in LR, ES, and Hybrid, as well as the corresponding approximated version. We present the results in Tab. \ref{tab:comp_tiny}.
\begin{table}[]
    \centering
    \caption{Classification accuracies of ResNet-12 on the Tiny-ImageNet dataset using different optimization methods.}
    \label{tab:comp_tiny}
     \resizebox{\linewidth}{!}{
    \begin{tabular}{cccccccc}
         \toprule
         Methods& BP& HSIC & FA  & ES & LR  & Hybrid   \\
         \hline
         w/o A&  49.8& 18.2 & 10.5  & 22.1 & 30.6 & 35.9 \\
         \textbf{w A} &  \textbf{48.2}&  \textbf{16.3} & \textbf{8.2} &  {$\bf{30.5}$} & {$\bf{43.5}$} &  {$\bf{48.3}$}\\
         \bottomrule
    \end{tabular}}
\end{table}

By integrating the approximated technique, the gradient estimation methods enjoy a large number of copies, around $4\times$ compared with vanilla LR/ES/Hybrid, contributing to the improvement of the performance.  Among all the baselines, with the loss of global information, the local learning optimizations, including HSIC and FA, lose scalability in deeper neural networks and larger datasets, while the gradient estimation methods use rough gradients to guide the optimization efficiently. It should be noted that the Hybrid method with approximated technique achieves the most comparable performance with BP, which only has a performance drop of $1.5\%$. It thoroughly demonstrates the feasibility and great potential of the approximated gradient estimation methods in training neural networks. 

\paragraph{Generalization to more architectures.}
\begin{table}[]
    \centering
    \caption{Comparison between different optimization methods without or with integrating the approximation on various architectures.}
    \label{tab:multi_architecures}
     \resizebox{0.8\linewidth}{!}{
    \begin{tabular}{c|ccc}
    \toprule
         Model& BP   & ES & LR  \\
         \hline
          LSTM & 91.2/88.4  & 85.4/85.6& 89.7/\textbf{90.6}\\
         GAT   & 81.6/80.3  & 34.4/34.8 &81.8/\textbf{82.5}\\
          SNN  & 90.5/87.9 & 74.6/75.1 &92.3/\textbf{93.3} \\
    \bottomrule
    \end{tabular}}
\end{table}We provide further evaluation experiments on more architectures, including training LSTM on Ag-News, GAT on Cora, and SNN on Fashion-MNIST. While the local learning methods loss scalability in diverse architecture, we only consider the gradient optimization methods,  ES and LR, as the baseline methods and integrate the approximated technique into them, respectively.  The results are shown in Tab. \ref{tab:multi_architecures}. Compared with baseline methods,  their approximated variants achieve a comparable result on three tasks.  Specifically,  the approximation technique improves the computation and memory efficiency, resulting  $0.8\%$ improvement in the classification accuracy.  The comparable classification accuracy indicates the effectiveness and scalability of the approximated gradient estimation in training various architectures.

\subsection{Evaluation of Running Efficiency}
We conduct experiments to highlight the improved running efficiency. Specifically, the backpropagation method is taken as the baseline for ResNet-5 training, namely the BP. We first apply the approximation technique to LR as the ALR,  where the INT8 data format is used for the storage and computation of the intermediate results benefiting from the sign encoding operation. Next, We integrate the pipeline strategy into both LR and ALR methods, namely LR-P and ALR-P. With an increasing number of copies for gradient estimation, we count corresponding iterations per second for each implementation as the criteria to evaluate the running efficiency. 
\begin{figure}
    \centering
    \includegraphics[width=0.70\linewidth]{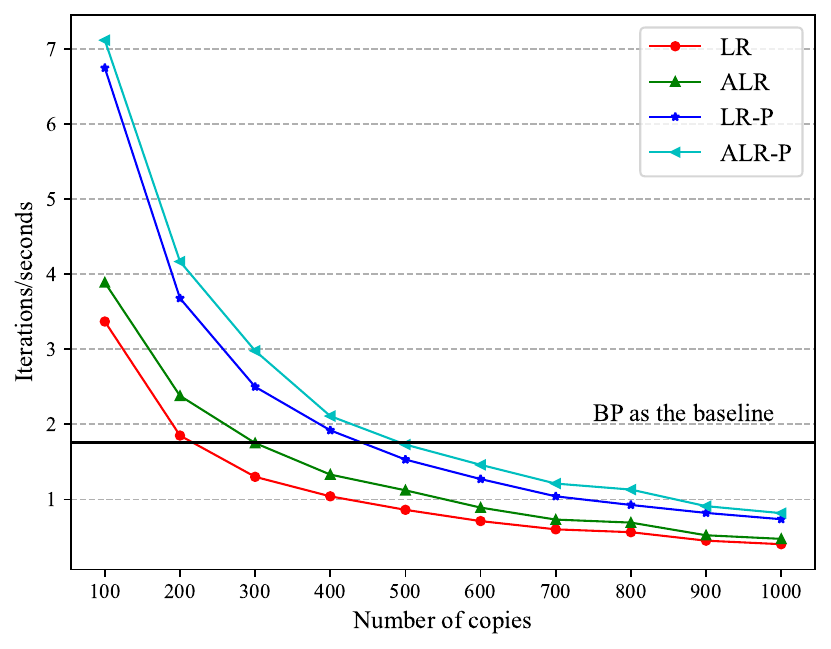}
    \vspace*{-0.15truein}
    \caption{Running efficiency of the LR training with integration of approximation (A-) and pipeline (-P). The black line indicates the number of iterations processed per second  using BP. }
    \label{fig:acc}
\end{figure}

As shown in Fig. \ref{fig:acc},  both the approximation technique and the pipeline strategies benefit the acceleration of the LR computation. Specifically, with an increasing number of copies for gradient estimation, ALR presents an improving superiority versus the LR with a maximum speedup of  $1.22\times$. At the same time, with the power of pipeline technique, ALR-P achieves an average speedup of $2.04 \times$ versus the baseline LR method. Besides, we reduce the additional memory usage for ALR  with $4 \times$ compared with LR due to the use of INT8 format, which brings an improvement of gradient accuracy improvement with the same computation power. It should also be noted that within a certain number of copies, the parallelism of LR brings an efficiency improvement to neural network training compared with BP. By further improving the number of copies for more accurate estimation, the efficiency advantage will gradually diminish to a disadvantage due to the consumption of data copy and limited memory. We call for a more in-depth theory study to alleviate this problem. 




\subsection{Gradient Estimation Performance Analysis}  
\begin{figure}
  \centering
  \includegraphics[width=0.8\linewidth]{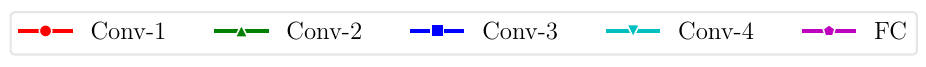}\\
  \includegraphics[width=0.48\linewidth]{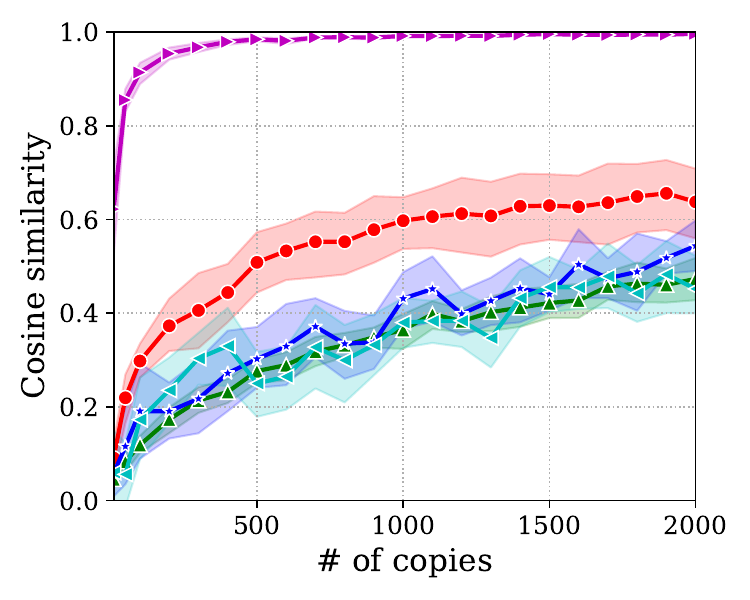}
  \includegraphics[width=0.48\linewidth]{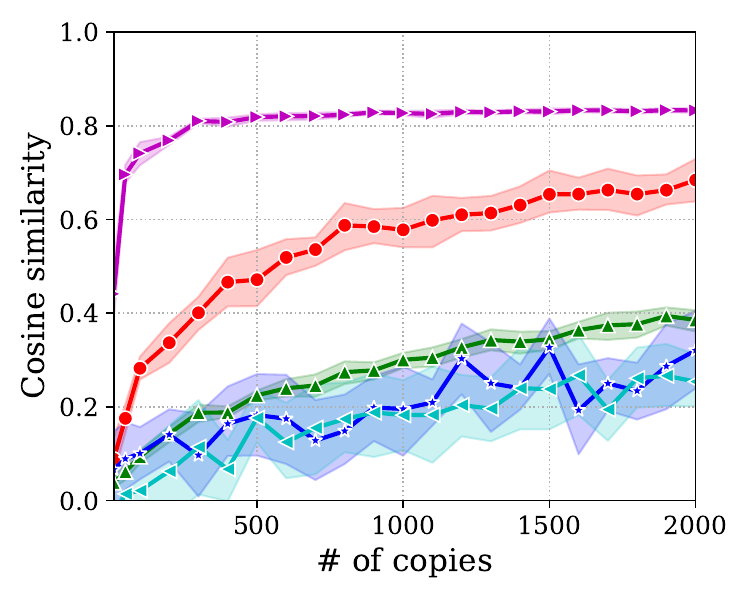}
  \vspace*{-0.15truein}
  \caption{From left to right, gradient cosine similarities between Hybrid and BP, A-Hybrid and BP. }
  \label{fig:gradient_acc}
\end{figure}

In this section, we provide more intuitive explanations from the perspective of the gradient estimation accuracy and optimization trajectory. 

\paragraph{Gradient Cosine Similarity.}For ResNet-5, we use the cosine similarity to perform a comparison of the gradient estimation accuracy between Hybrid and A-Hybrid as shown in Fig. \ref{fig:gradient_acc}. We conduct $10$ independent experiments, using the mean value to plot the solid lines and employing the shadow region to indicate the variance between the macro experiments, where the region width represents the variance or uncertainty. 

From left to right, we respectively present the cosine similarities of gradient estimations by Hybrid and BP,  A-Hybrid and BP. We can see that 1) the approximated technique reduces the gradient estimation accuracy, especially for the fully connected layer, which has a similarity drop from around $1.0$ to $0.8$ under the number of copies $2,000$; 2) The shadow regions of A-Hybrid are much thinner than those of Hybrid, which indicates that the approximation  reduces the uncertainty of the vanilla method.    


\paragraph{Quantitative Analysis.}We propose quantitative metrics to give a specific comparison between different gradient estimation methods. Specifically,  for a given range of copies $[n_1,  n_2]$, we denote the gradient estimation accuracy $\textit{Acc}$ as 
\begin{equation}
    \textit{Acc.}= \frac{1}{n_2-n_1}\sum_{n=n_1}^{n_2}\cos{(g_n, g)}, 
    \label{eq:q_metric}
\end{equation}
where $g_n$ is the estimated gradient using the number of copies $n$,  and $g$ is the true gradient computed by the BP. A higher value of \textit{Acc.} indicates a more accurate gradient estimation.

Considering the consistency of the estimation method corresponding to the number of copies, we propose to use the standard error to evaluate the stability, 
\begin{equation}
    \textit{Sta.}= \sqrt{\frac{1}{n_2-n_1}\sum\limits_{n=n_1}^{n_2}(\cos{(g_n, g)-\hat{y}})^2}, 
    \label{eq:re}
\end{equation}
where we denote $\hat{y}$ as the ideal cosine similarity of gradient estimation outputted by the fitted linear regression model on observed  $\cos{(g_n, g)}$. A lower value of \textit{Sta.} indicates a better performance, where a large number of copies can consistently bring an improved gradient estimation performance.

For training the ResNet-5 on the CIFAR-10 dataset, we can compute the gradient estimation accuracy and stability for different methods in Tab. \ref{tab:tab_acc}. It can be observed that the approximation only introduces a little performance drop with an average of $0.03$ on the gradient estimation accuracy, while it improves the stability with an average of 0.04. Especially, it has an improvement of $0.05$ for LR, and $0.05$ for Hybrid, which contributes  to the neural network training process.
\begin{table}[]
    \centering
    \caption{Gradient estimation accuracy (Acc.) and stability (Sta.) of different gradient estimation methods. }
    \label{tab:tab_acc}
    \resizebox{.45\textwidth}{!}{
    \begin{tabular}{c|cccccc}
    \toprule
         Methods & LR& ALR & ES &AES & Hybrid &A-Hybrid  \\
         \hline
         \textit{Acc. $\uparrow$}&  0.36 & 0.34 & 0.22 & 0.18 & 0.42 & 0.39  \\
          \textit{Sta. $\downarrow$}&  0.21 & 0.16 & 0.06 & 0.05 & 0.18 & 0.13\\
         \bottomrule
    \end{tabular}}
\end{table}
\begin{figure}[t]
    \centering
    \includegraphics[width=0.9\linewidth]{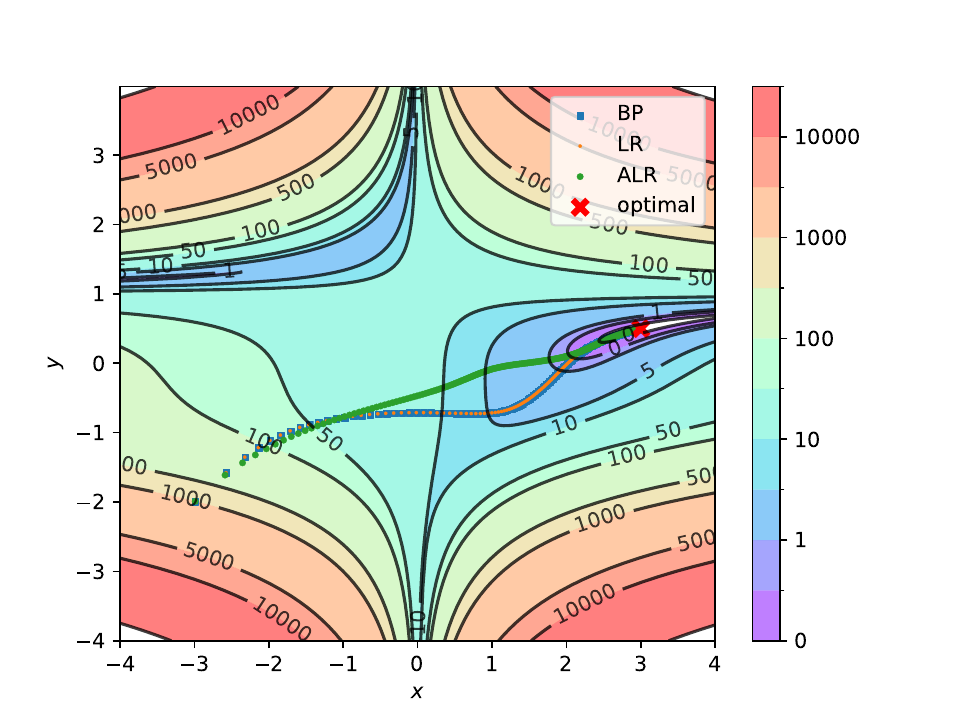}
    \vspace*{-0.1truein}
    \caption{Visualization of different optimization trajectories. For the 2-D optimization problem, we use \textcolor{blue}{blue line} to indicate the optimization trajectory of BP,  \textcolor{orange}{orange line} for LR, and \textcolor{green}{green line} for ALR.}
    \label{fig:viz_sign_lr}
\end{figure}
\paragraph{Visualization of optimization.} To further observe the behavior of different methods, we visualize their optimization trajectories of minimizing Beale’s function, i.e.,
\begin{equation*}
    \begin{aligned}
        \min\limits_{(x,y)\in\mathbb{R}} \mathcal{L}(x,y) = \big(\frac{3}{2} - x + xy\big)^2 + \big(\frac{9}{4} - x + xy^2\big)^2 + \big(\frac{21}{8} - x + xy^3\big)^2,
    \end{aligned}
\end{equation*}
where we initialize $(x,y)$ as $(-3,2)$, and represent the optimization variables as the weights of a single-layer neural network with a constant input of $1$ and a 2-dimensional output for all three methods.

We present the optimization trajectories of BP, LR, and ALR in Fig. \ref{fig:viz_sign_lr}. Each trajectory is color-coded: the \textcolor{blue}{blue line} represents BP, the \textcolor{orange}{orange line} corresponds to LR, and the \textcolor{green}{green line} signifies ALR. Beginning with the same initial conditions, a clear distinction emerges. ALR follows a more direct and expedited path, in contrast to BP and LR, which exhibit nearly identical optimization trajectories and are prone to convergence at sub-optimal points. This observation suggests that the integration of the approximated technique holds the potential to enhance exploratory capabilities, facilitating the model's traversal across multiple local optima.

\section{Related Work}
Numerous approaches have been proposed as potential alternatives to BP. These approaches can be broadly classified into three categories:  inter-layer dependence reduction, utilization of auxiliary variables, and perturbation-based gradient estimation.

For layer-wise dependence minimization, the HSIC bottleneck~\citep{ma2020hsic}  maximizes the corresponding measure to improve the independence of the feature representation between layers.  For the use of auxiliary variables,  the FA~\citep{nokland2016direct} uses learned feedback weight matrices to dispute the use of the chain rule in backward pass. the neural tangent kernel~\citep{jacot2018neural} employs a kernel function to model the relationship between the perturbation  and gradient. For perturbation-based gradient estimation, The ES~\citep{salimans2017evolution} random perturbs the weights and selects the better weights with minimal loss for  training. The LR method~\citep{peng2022new}  injects the noise on the intermediate features for the gradient estimation. A recent work~\citep{jiang2023training} extends LR to a wide range of architectures and points out that the ES can be derived within the LR technical framework by perturbing  weights.

In this paper, we align our work with the LR method. Recognizing the significant memory consumption challenges stemming from the proliferation of numerous copies, a factor that impedes the practical implementation of LR, we introduce an approximate LR approach. This method aims to mitigate memory-related concerns and enhance gradient estimation performance, particularly when dealing with a substantial number of copies.

\section{Conclusion}

In this study, to alleviate the problem of enormous memory consumption of gradient estimation techniques, we propose a novel approximated likelihood ratio method for enhancing neural network training. By discarding the storage of intermediate variables in high precision and retaining only their sign, we achieve a substantial reduction in both memory consumption and computational complexity.   Besides, by fully deploying the inherent parallelism of LR, we further propose a forward-only parallel pipeline to boost the efficiency of neural network training. Extensive experiments on various datasets and networks thoroughly validate the effectiveness of the proposed approximation technique and pipeline. We hope this study can motivate more future work on finding alternatives to BP for better efficiency, explainability, and biological plausibility.

\bibliographystyle{named}
\bibliography{ijcai24}

\end{document}